\documentclass[letterpaper,english]{elsarticle}
\usepackage[T1]{fontenc}
\usepackage[latin9]{inputenc}
\usepackage{float}
\usepackage{amsmath}
\usepackage{amssymb}
\usepackage{graphicx}

\makeatletter

\pdfpageheight\paperheight
\pdfpagewidth\paperwidth





\usepackage{url}            

\usepackage{amsmath, amsthm}
\usepackage{algorithm, algpseudocode}


\renewcommand\[{\begin{equation}}
\renewcommand\]{\end{equation}} 

\newtheorem{theorem}{Theorem}[section]

\newtheorem{lemma}[theorem]{Lemma}



\newtheorem{assumption}[theorem]{Assumption}


\newcommand{\interior}[1]{%
  {\kern0pt#1}^{\mathrm{o}}%
}

\makeatother

\usepackage{babel}
\begin{document}

\title{Probabilistic bounds on neuron death in deep rectifier networks}

\author{Blaine Rister$_{a}$, Daniel L. Rubin$_{b}$}

\address{$_{a}$Stanford University, Department of Electrical Engineering.
email: blaine@stanford.edu. Corresponding author.\\
$_{b}$Stanford University, Department of Radiology (Biomedical Informatics
Research). email: dlrubin@stanford.edu.\\
1265 Welch Rd\\
Palo Alto, CA, USA 94305}
\begin{abstract}
Neuron death is a complex phenomenon with implications for model trainability:
the deeper the network, the lower the probability of finding a valid
initialization. In this work, we derive both upper and lower bounds
on the probability that a ReLU network is initialized to a trainable
point, as a function of model hyperparameters. We show that it is
possible to increase the depth of a network indefinitely, so long
as the width increases as well. Furthermore, our bounds are asymptotically
tight under reasonable assumptions: first, the upper bound coincides
with the true probability for a single-layer network with the largest
possible input set. Second, the true probability converges to our
lower bound as the input set shrinks to a single point, or as the
network complexity grows under an assumption about the output variance.
We confirm these results by numerical simulation, showing rapid convergence
to the lower bound with increasing network depth. Then, motivated
by the theory, we propose a practical sign flipping scheme which guarantees
that the ratio of living data points in a $k$-layer network is at
least $2^{-k}$. Finally, we show how these issues are mitigated by
network design features currently seen in practice, such as batch
normalization, residual connections, dense networks and skip connections.
This suggests that neuron death may provide insight into the efficacy
of various model architectures.
\end{abstract}
\maketitle{}

\begin{keywords}ReLU networks, neuron death, hyperparameters, probability,
statistics, machine learning

\end{keywords}

\pagebreak{}

\section{Introduction}

Despite the explosion of interest in deep learning over the last decade,
network design remains largely experimental. Engineers are faced with
a wide variety of design choices, including preprocessing of the input
data, the number of layers in the network, the use of residual and
skip connections between the layers, the width or number of neurons
in each layer, and which types of layers are used. Even with modern
parallel processing hardware, training a deep neural network can take
days or even weeks depending on the size and nature of the input data.
The problem is particularly acute in applications like biomedical
image analysis \citep{Insee:2021:nnU-net}. This imposes practical
limits on how much experimentation can be done, even with the use
of automatic model searching tools \citep{Hutter:2018:autoMLBook}.
In these cases, theoretical analysis can shed light on which models
are likely to work\textit{ a priori}, reducing the search space and
accelerating innovation.

This paper explores the model design space through the lens of neuron
death. We focus on the rectified linear unit (ReLU), which is the
basic building block of the majority of current deep learning models
\citep{Glorot:2011:ReluNetworks,Goodfellow:2016:DLBook,Krizhevsky:2012:ImageNetCW}.
The ReLU neuron has the interesting property that it maps some data
points to a constant function, at which point they do not contribute
to the training gradient. The property that certain neurons focus
on certain inputs may be key to the success of ReLU neurons over the
sigmoid type. However, a ReLU neuron sometimes maps all data to a
constant function, in which case we say that it is dead. If all the
neurons in a given layer die, then the whole network is rendered untrainable.
In this work, we show that neuron death can help explain why certain
model architectures work better than others. While it is difficult
to compute the probability of neuron death directly, we derive upper
and lower bounds by a symmetry argument, showing that these bounds
rule out certain model architectures as intractable.

Neuron death is a well-known phenomenon which has inspired a wide
variety of research directions. It is related to the unique geometry
of ReLU networks, which have piecewise affine or multi-convex properties
depending on whether the inference or training loss function is considered
\citep{Balestriero:2019:powerDiagram,Montufar:2014:linearRegionsNN,Rister:2017:piecewiseConvexity}.
Many activation functions have been designed to preclude neuron death,
yet the humble ReLU continues to be preferred in practice, suggesting
these effects are not completely understood \citep{Cheng:2020:PDeLU,Djork:2016:ELU,Goodfellow:2013:MaxOut,He:2015:HeInitialization,Ramachandran:2018:SwishNonlinearity}.
Despite widespread knowledge of the phenomenon, there are surprisingly
few theoretical works on the topic. Several works analyzed the fraction
of living neurons in various training scenarios. Wu et al.~studied
the empirical effects of various reinforcement leaning schemes on
neuron death, deriving some theoretical bounds for the single-layer
case \citep{wu2018:AdaptiveNetworkScaling}. Ankevist et al.~studied
the effect of different training algorithms on neuron death, both
empirically as well as theoretically using a differential equation
as a model for network behavior \citep{arnekvist:2020:dyingReluMomentum}.
To our knowledge, a pair of recent works from Lu and Shin et al.~were
the first to discuss the problem of an entire network dying at initialization,
and the first to analyze neuron death from a purely theoretical perspective
\citep{lu2019:dyingRelu,shin2019:dyingRelu2}. They argued that, as
a ReLU network architecture grows deeper, the probability that it
is initialized dead goes to one. If a network is initialized dead,
the partial derivatives of its output are all zero, and thus it is
not amenable to differential training methods such as stochastic gradient
descent \citep{LeCun:1998:EfficientBackprop}. This means that, for
a bounded width, very deep ReLU networks are nearly impossible to
train. Both works propose upper bounds on the probability of living
initialization, suggesting new initialization schemes to improve this
probability. However, there is much left to be said on this topic,
as bounds are complex and the derivations include some non-trivial
assumptions. 

In this work, we derive simple upper and lower bounds on the probability
that a random ReLU network is alive. Our proofs are based on a rigorous
symmetry argument requiring no special assumptions. Our upper bound
rigorously establishes the result of Lu et al., while our lower bound
establishes a new positive result, that a network can grow infinitely
deep so long as it grows wider as well \citep{lu2019:dyingRelu}.
We show that the true probability agrees with our bounds in the extreme
cases of a single-layer network with the largest possible input set,
or a deep network with the smallest possible input set. We also show
that the true probability converges to our lower bound along any path
through hyperparameter space such that neither the width nor depth
is bounded. Our proof of the latter claim requires an assumption about
the output variance of the network. We also note that our lower bound
is exactly the probability that a single data point is alive. All
of these results are confirmed by numerical simulations.

Finally, we discuss how information loss by neuron death furnishes
a compelling interpretation of various network architectures, such
as residual layers (ResNets), batch normalization, and skip connections
\citep{He:2016:ResNet,Ioffe:2015:batchNorm,Shelhamer:2017:FCN}. This
analysis provides \textit{a priori} means of evaluating various model
architectures, and could inform future designs of very deep networks,
as well as their biological plausibility. Based on this information,
we propose a simple sign-flipping initialization scheme guaranteeing
with probability one that the ratio of living training data points
is at least $2^{-k}$, where $k$ is the number of layers in the network.
Our scheme preserves the marginal distribution of each parameter,
while modifying the joint distribution based on the training data.
We confirm our results with numerical simulations, suggesting the
actual improvement far exceeds the theoretical minimum. We also compare
this scheme to batch normalization, which offers similar, but not
identical, guarantees. 

The contributions of this paper are summarized as follows:
\begin{enumerate}
\item New upper and lower bounds on the probability of ReLU network death,
\item proofs of the optimality of these bounds,
\item interpretation of various neural network architectures in light of
the former, and
\item a tractable initialization scheme preventing neuron death.
\end{enumerate}

\section{Preliminary definitions}

Given an input dimension $d$, with weights $a\in\mathbb{R}^{d}$
and $b\in\mathbb{R}$, and input data $x\in\mathbb{R}^{n}$, a ReLU
\textbf{neuron} is defined as 
\[
f(x)=\max\left\{ a\cdot x+b,0\right\} .
\]
A ReLU \textbf{layer} of with $n$ is just the vector concatenation
of $n$ neurons, which can be written $g(x)=\max\{Ax+b,0\}$ with
parameters $A\in\mathbb{R}^{n\times n}$ and $b\in\mathbb{R}^{n}$,
where the maximum is taken element-wise. A ReLU \textbf{network} with
$k$ layers is $F_{k}(x)=g_{k}\circ\dots\circ g_{1}(x)$, the composition
of the $k$ layers. The parameters of a network are denoted $\theta=(A_{1},b_{1},\dots,A_{n},b_{n})$,
a point in $\mathbb{R}^{N(n,k)}$ where $N(n,k)=k(n^{2}+n)$ is the
total number of parameters in the network. To simplify the proofs
and notation, we assume that the width of each layer is the same throughout
a network, always denoted $n$.

In practice, neural networks parameters are often initialized from
some random probability distribution at the start of training. This
distribution is important to our results. In this work, as in practice,
we always assume that $\theta$ follows a \textbf{symmetric}, zero-mean
probability density function (PDF). That is, the density of a parameter
vector $\theta$ is not altered by flipping the sign of any component.
Furthermore, all components of $\theta$ are assumed to be statistically
independent and identically distributed (IID), except where explicitly
stated otherwise. We sometimes follow the practical convention that
$b=0$ for all layers, which sharpens the upper bound slightly, but
most of our results hold either way.

We sometimes refer to the response of a network layer before the rectifying
nonlinearity. The pre-ReLU response of the $k^{th}$ layer is denoted
$\tilde{F}_{k}(x)$, and consists of the first $k-1$ layers composed
with the affine part of the $k^{th}$ layer, without the final maximum.
For short-hand, the response of the $l^{th}$ neuron in the $k^{th}$
layer is denoted $F_{k.l}$, while the pre-ReLU response is $\tilde{F}_{k.l}$.

Let $S_{0}\subseteq\mathbb{R}^{n}$ be the input domain of the network.
Let $S_{k}=f_{k}\circ\dots\circ f_{1}(S_{0})$, the image of $S_{0}$
under the $k^{th}$ layer, which gives the domain of $F_{k+1}$. Let
the random variable $x\in S_{0}$ denote the input datum, the precise
distribution of which is not relevant for this paper. We say that
$x$ is \textbf{dead }at layer $k$ if the Jacobian $\mathcal{D}F_{k}(x)$
is the zero matrix,\footnote{For convenience, assume the convention that a partial derivative is
zero whenever $\tilde{F}_{k,l}(x)=0$, in which case it would normally
be undefined.} taken with respect to the parameters $(A_{k},b_{k})$. By the chain
rule, $x$ is then dead in any layer after $k$ as well. Note that
if $b=0$, this is equivalent to the statement that $F_{k,l}$ is
the zero vector, while for $b\ne0$ it can be any constant.

The \textbf{dead set} of a ReLU neuron is
\begin{equation}
D_{f}=\{x:a\cdot x+b\le0\}\label{eq:dead_set_neuron}
\end{equation}
which is a half-space in $\mathbb{R}^{n}$. The dead set of a layer
is just the intersection of the half-space of each neuron, a convex
polytope, which is possibly empty or even unbounded. For the sake
of simplicity, this paper follows the convention that the number of
features $n$ is identical in each layer. In this case, the dead set
is actually an affine cone, as there is exactly one vertex which is
the intersection of $n$ linear equations in $\mathbb{R}^{n}$, except
for degenerate cases with probability zero. Furthermore, if we follow
the usual practice that $b=0$ at initialization, then the dead set
is a convex cone with vertex at the origin, as seen in figure \ref{fig:dead_sets}. 

For convenience, we denote the dead set of the $l^{th}$ neuron in
the $k^{th}$ layer as $D_{k,l}$, while the dead set of the whole
$k^{th}$ layer is $D_{k}=\cap_{l=1}^{n}D_{k,l}$. If $\tilde{F}_{k}(x)\in D_{k}$,
and $k$ is the least layer in which this occurs, we say that $x$
is \textbf{killed} by that layer. A dead neuron is one for which all
of $S_{0}$ is dead. In turn, a dead layer is one in which all neurons
are dead. Finally, a dead network is one for which the final layer
is dead, which is guaranteed if any intermediate layer dies.

We use $P(n,k)$ to denote the probability that a network of width
$n$ features and depth $k$ layers is alive, i.e.~not dead, the
estimation of which is the central study of this work. For convenience
in some proofs, we use $A_{k}\subset\mathbb{R}^{N(n,k)}$ denote the
event that the $k^{th}$ layer of the network is alive, a function
of the parameters $\theta$. Under this definition, $P(n,k)=P(A_{k})$.
\begin{figure}
\begin{centering}
\includegraphics[scale=0.5]{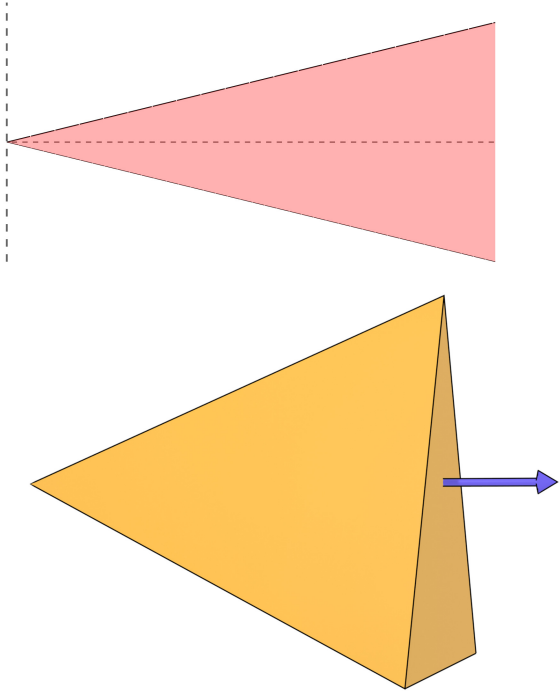}
\par\end{centering}
\caption{A convex cone, the typical dead set for a ReLU layer from $\mathbb{R}^{3}$
to itself. Top: projection in the $xy$-plane. Bottom: 3D rendering.
The blue arrow indicates extension of the cone to infinity.}
\label{fig:dead_sets}
\end{figure}

\section{Upper bound\label{sec:Upper-bound}}

First, we derive a conservative upper bound on $P(n,k)$. We know
that $S_{k}\subseteq\mathbb{R}_{+}^{n}$ for all $k>1$, due to the
ReLU nonlinearity. Thus, if layer $k>1$ kills $\mathbb{R}_{+}^{n}$,
then it also kills $S_{k}$ . Now, $R_{+}^{n}\subseteq D_{k}$ if
and only if none of the $n^{2}+n$ parameters are positive. Since
the parameters follow symmetric independent distributions, we have
$P(\mathbb{R}_{+}^{n}\subseteq D_{k})=2^{-n^{2}-n}$.

Next, note that layer $k$ is alive only if $k-1$ is, or more formally,
$A_{k}\subset A_{k-1}$. The law of total probability yields the recursive
relationship
\begin{equation}
P(n,k)=P(n,k|A_{k-1})P(n,k-1).\label{eq:layer_recursion}
\end{equation}
 Then, since $\mathbb{R}_{+}^{n}\subseteq D_{k}$ is independent of
$A_{k-1}$, we have the upper bound
\begin{align}
P(n,k) & \le\prod_{j=2}^{k}\left(1-P(\mathbb{R}_{+}^{n}\subseteq D_{k})\right)\nonumber \\
 & =\left(1-2^{-n^{2}-n}\right)^{k-1}.\label{eq:upper_bound}
\end{align}
Note that the bound can be sharpened to an exponent of $-n^{2}$ if
the bias terms are initialized to zero. For fixed $n$, this is a
geometric sequence in $k$. Note the limit is zero, verifying the
claim of Lu et al \citep{lu2019:dyingRelu}.

\section{Lower bound\label{sec:Lower-bound}}

The previous section showed that deep networks die in probability.
Luckily, we can bring our networks back to life again if they grow
deeper and wider simultaneously. Our insight is to show that, while
a deeper network has a higher chance of dead initialization, a wider
network has a lower chance. In what follows, we derive a lower bound
for $P(n,k)$ from which we compute the minimal width for a given
network depth. The basic idea is this: in a living network, $S_{k}\ne\emptyset$
for all $k$. Recall from equation \ref{eq:dead_set_neuron} that
the dead set of a neuron is a sub-level set of a continuous function,
thus it is a closed set \citep{Johnsonbaugh:1970:RealAnalysis}. Given
any point $x\in\mathbb{R}^{n}$, $x\notin D_{k}$ implies that there
exists some neighborhood around $x$ which is also not in $D_{k}$.
This means that a layer is alive so long as it contains a single living
point, so a lower bound for the probability of a living layer is the
probability of a single point being alive. 

It remains to compute the probability of a single point being alive,
the compliment of which we denote by $\gamma=\inf_{x\in\mathbb{R}^{n}}P(x\in D)$,
where $D$ is the dead set of some neuron. Surprisingly, $P(x\in D)$
does not depend on the value of $x$. Given some symmetric distribution
$\rho(a,b)$ of the parameters, we have
\begin{align}
P(x\in D) & =\int_{a\cdot x+b\le0}\rho(a,b)d(a,b).\label{eq:}
\end{align}
Now, the surface $a\cdot x+b=0$ can be rewritten as $(a\,b)\cdot(x,1)=0$,
a hyperplane in $\mathbb{R}^{2(n+1)}$. Since this surface has Lebesgue
measure zero, its contribution to the integral is negligible. Combining
this with the definition of a PDF, we have
\begin{align}
P(x\in D) & =1-\int_{a\cdot x+b\ge0}\rho(a,b)d(a,b)\label{eq:-1}
\end{align}
Then, change variables to $(\tilde{a},\tilde{b})=(-a,-b)$, with Jacobian
determinant $(-1)^{2(n+1)}=1$, and apply the symmetry of $f$ to
yield
\begin{align}
\int_{a\cdot x+b<0}\rho(a,b)d(a,b) & =1-\int_{\tilde{a}\cdot x+\tilde{b}<0}\rho(\tilde{a},\tilde{b})d(\tilde{a},\tilde{b}).\label{eq:-2}
\end{align}
Thus $\gamma=1-\gamma$, so $\gamma=1/2$. The previous calculation
can be understood in a simple way devoid of any formulas: for any
half-space containing $x$, the closure of its compliment is equally
likely and with probability one, exactly one of these sets contains
$x$. Thus, the probability of drawing a half-space containing $x$
is the same as the probability of drawing one that does not, so each
must have probability $1/2$.

Finally, if any of the $n$ neurons in layer $k$ is alive at $x$,
then the whole layer is alive. Since these are independent events
with probability $1/2$, $P(n,k|A_{k-1})\ge1-2^{-n}$ . It follows
from equation \ref{eq:layer_recursion} that
\begin{align}
P(n,k) & \ge(1-2^{-n})^{k}.\label{eq:lower_bound-1}
\end{align}

From this inequality, we can compute the width required to achieve
$P(n,k)=p$ as 
\begin{equation}
n=-\log_{2}(1-p^{1/k}).\label{eq:width_formula_with_gamma}
\end{equation}

See figure \ref{fig:lb_plots} for plots of these curves for various
values of $p$.

\begin{figure}
\begin{centering}
\includegraphics[scale=0.42]{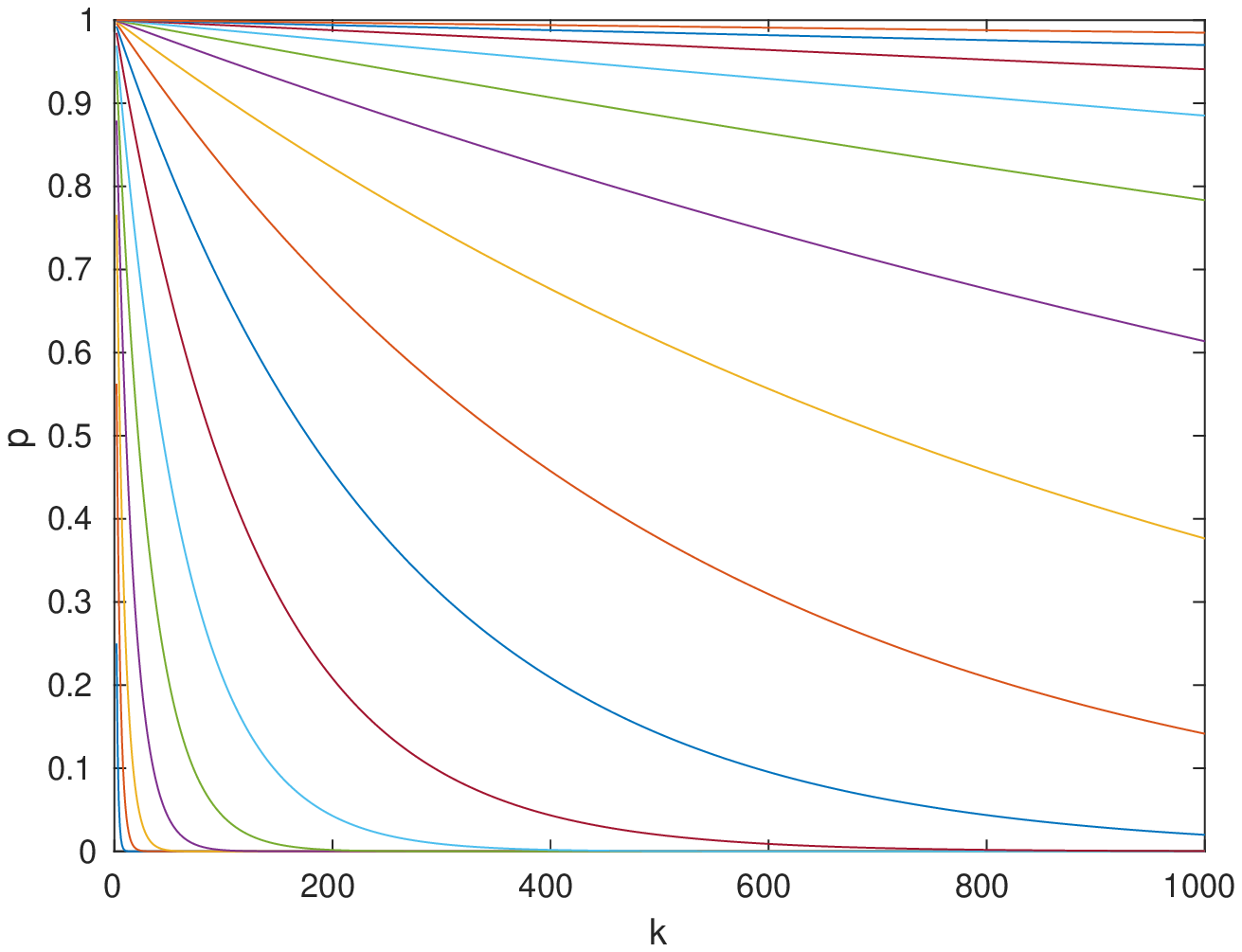}\includegraphics[scale=0.42]{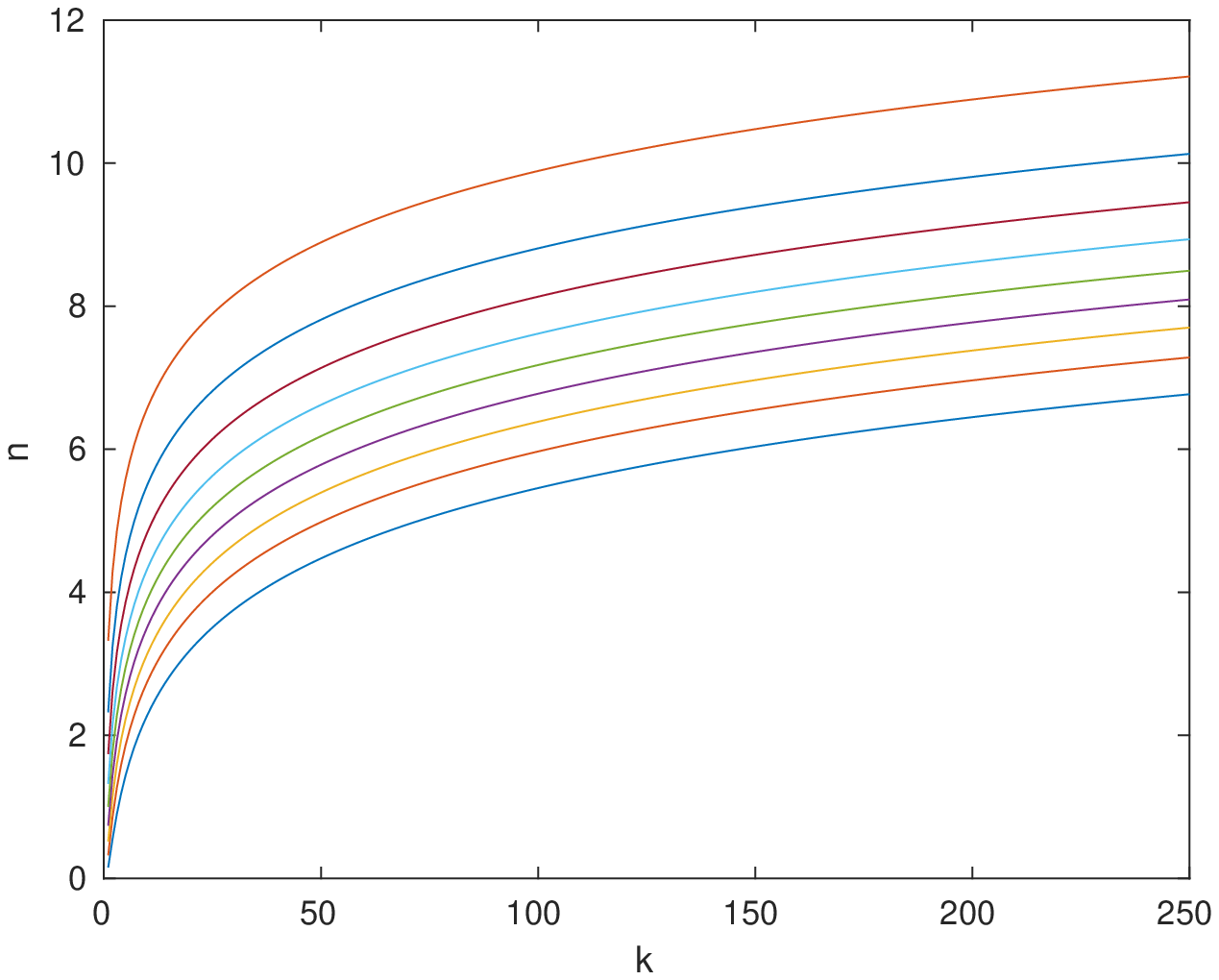}
\par\end{centering}
\caption{Left: Plot of $\ell(n,k)$ from equation \ref{eq:lower_bound-1},
with the number of features $n$ ranging in $[1,16]$. Right: Plot
of equation \ref{eq:width_formula_with_gamma} for $p$ ranging in
$[0.1,0.9]$. Shows the value of $n$ required to achieve $\ell(n,k)=p$
in a $k$-layer network. }
\label{fig:lb_plots}
\end{figure}

\section{Tightness of the bounds\label{sec:Tightness-of-the}}

Combining the results of the previous sections, we have the bounds
\begin{equation}
(1-2^{-n})^{k}\le P(n,k)\le(1-2^{-n^{2}-n})^{k-1}.\label{eq:both_bounds}
\end{equation}

Let $\ell(n,k)$ denote the lower bound, and $\upsilon(n,k)$ the
upper. We now show that these bounds are asymptotically tight, as
is later verified by our experiments. More precisely, $\upsilon(n,1)=P(n,1)$
for any $n$, with $S_{0}=\mathbb{R}^{n}$ . Furthermore, $P(n,k)\rightarrow\ell(n,k)$
along any path $n(k)$ such that $\lim_{k\rightarrow\infty}n(k)=\infty$.
This means that these bounds are optimal for the extreme cases of
a single-layer network, or a very deep network.

\subsection{Tightness of the upper bound for $k=1$}

First we show tightness of the upper bound, for a single-layer network.
Note that $\nu(n,1)=1$, so this is equivalent to showing that $P(n,1)=1$.
This is the case when the input set $S_{0}=\mathbb{R}^{n}$, and the
probability increases to this as the input set grows to fill $\mathbb{R}^{n}$.
This is formalized in the following theorem.

\begin{theorem}

Let $T_{1}\subseteq T_{2}\subseteq\dots$ be a sequence of nested
sets having $\mathbb{R}^{n}$ as their limit. Let the input set be
$S_{0}=T_{i}$, the $i^{th}$ set from this sequence. Then $P(n,1)\rightarrow1$
monotonically as $i\rightarrow\infty$.

\end{theorem}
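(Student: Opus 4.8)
The plan is to work with the complementary event --- the probability that the single layer is \emph{dead} --- and show it decreases to zero. Since $k=1$, the network is dead precisely when all $n$ of its neurons are dead, and a neuron with parameters $(a,b)$ is dead exactly when $S_0\subseteq D_f$, i.e.\ $a\cdot x+b\le 0$ for every $x\in S_0$. With $S_0=T_i$ I would write $E_i$ for the event (a set of parameter values $(a,b)$) that a fixed neuron is dead. Because the $T_i$ are nested increasing, the constraint $a\cdot x+b\le 0\ \forall x\in T_i$ only becomes harder to satisfy as $i$ grows, so $E_1\supseteq E_2\supseteq\cdots$ is a decreasing sequence of events, with $q_i:=P(E_i)$ non-increasing.

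For monotonicity of $P(n,1)$ itself I would argue directly by set inclusion, without invoking independence. The event that the layer is \emph{alive} at $S_0=T_i$ --- that some neuron and some $x\in T_i$ satisfy $a\cdot x+b>0$ --- is contained in the corresponding event for $T_{i+1}$, since any witnessing $x\in T_i$ also lies in $T_{i+1}\supseteq T_i$. Hence $P(n,1)$ is non-decreasing in $i$.

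For the limit, I would use the IID assumption across neurons to factor $P(\text{layer dead at }T_i)=q_i^{\,n}$, and then apply continuity of the probability measure from above to the decreasing sequence $E_i$, giving $q_i\to P\!\left(\bigcap_i E_i\right)$. The intersection is exactly the event that $a\cdot x+b\le 0$ for all $x\in\bigcup_i T_i$; since the nested sets have $\mathbb{R}^n$ as their limit, $\bigcup_i T_i=\mathbb{R}^n$ (and in any case its closure is), so by continuity of $x\mapsto a\cdot x+b$ this reduces to the event that $a\cdot x+b\le 0$ for \emph{every} $x\in\mathbb{R}^n$. A non-constant affine function is unbounded above, so this forces $a=0$ (and $b\le 0$). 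Because the parameters are drawn from a PDF, the set $\{a=0\}$ has Lebesgue measure zero and hence probability zero, yielding $q_i\to 0$ and therefore $P(n,1)=1-q_i^{\,n}\to 1$. (One could even skip the factorization, since $P(\text{layer dead})\le q_i\to 0$ already suffices.)

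The main obstacle I anticipate is the limit step --- specifically, pinning down the precise meaning of ``$\mathbb{R}^n$ as their limit'' and confirming that the limiting dead event really does collapse to the measure-zero set $\{a=0,\ b\le 0\}$. The continuity-from-above argument only requires that $\bigcup_i T_i$ be dense in $\mathbb{R}^n$, which I would argue follows from the hypothesis; once that density is established, the conclusion that the limiting event has probability zero is immediate from absolute continuity of the parameter distribution, and the remaining steps are routine.
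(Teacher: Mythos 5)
Your proof is correct, and it is both more explicit and more careful than the paper's own argument, though the underlying tool is the same: continuity of a probability measure along a monotone sequence of events. The paper instead defines $\mu(T_{j})$ as the probability of a living network with input set $T_{j}$, asserts that this set function ``satisfies the axioms of a probability measure,'' and invokes continuity from below together with the unproven assertion $\mu(\mathbb{R}^{n})=1$. Strictly speaking that framing is flawed: $\mu$ is not additive over disjoint input sets (alive on $T_{1}\cup T_{2}$ is the \emph{union} of the two alive events in parameter space, so $\mu$ is only subadditive), and what actually does the work is exactly what you use --- the alive/dead events in \emph{parameter} space are monotone in $i$, and $P$ on parameter space is a genuine measure. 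Your version also supplies the two ingredients the paper leaves implicit: (i) the identification of the limiting dead event as $\{a\cdot x+b\le 0\ \forall x\in\mathbb{R}^{n}\}=\{a=0,\,b\le0\}$, which has probability zero because the parameters have a density --- this is precisely why $P(n,1)=1$ at $S_{0}=\mathbb{R}^{n}$, i.e.\ the paper's $\mu(\mathbb{R}^{n})=1$; and (ii) a clean monotonicity argument by set inclusion of the alive events, which needs no independence at all. The IID factorization $q_{i}^{\,n}$ is harmless but, as you note yourself, unnecessary, since $P(\text{layer dead})\le q_{i}\to0$ already suffices. The one caveat, which you flag correctly, is the meaning of ``$\mathbb{R}^{n}$ as their limit''; the paper's own proof takes this to mean $\mathbb{R}^{n}=\cup_{i=1}^{\infty}T_{i}$, under which your density argument is immediate, so there is no gap.
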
\label{theorem: prob_converge_1}

\begin{proof}

First note that $\mu(\mathbb{R}^{n})=1$. Next, let $\mu(T_{j})=P(n,k)$
with input set $T_{j}$. It is easy to verify that this satisfies
the axioms of a probability measure. Then we have 
\begin{align*}
\mathbb{R}^{n} & =\cup_{i=1}^{\infty}T_{i}\\
\mu(\mathbb{R}^{n}) & =\lim_{i\rightarrow\infty}\mu\left(T_{i}\right),
\end{align*}

whence converge and monotonicity follow from the axioms of measure
theory. 

\end{proof}

\subsection{Tightness of the lower bound with contracting input set\label{subsec:Tightness-of-lb-contracting-input}}

Next we show tightness of the lower bound. Recalling that our upper
bound is tight for a single-layer network with all of $\mathbb{R}^{n}$
as its input, it stands to reason that the lower bound will be tight
for either smaller input sets or more complex models. This intuition
is correct, as we will show rigorously in lemma \ref{lemma:prob_x_alive}.
In preparation, notice from the derivation of $\ell(n,k)$ in section
\ref{sec:Lower-bound} that it equals the probability that some data
point $x\in\mathbb{R}^{n}$ will be alive in a random network. Owing
to the symmetry of the parameter distribution, this does not even
depend on the value of $x$, except in the technical case that $x=0$
and the biases are all zero. This furnishes the following theorem,
which is verified experimentally by figure \ref{fig:grid_results}
in section \ref{sec:Experimental-validation}.

\begin{lemma}\label{lemma:prob_x_alive}

Let $x\in\mathbb{R}^{n}\backslash\{0\}$. Then the probability that
$x$ is alive in a $k$-layer network is $\ell(n,k)$. 

\end{lemma}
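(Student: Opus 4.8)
The plan is to prove the lemma by induction on the depth $k$, mirroring the layer recursion of equation \ref{eq:layer_recursion} but specialized to a single data point rather than an entire layer. Let $L_k$ denote the event that the fixed point $x$ is alive at layer $k$, i.e.~that its Jacobian $\mathcal{D}F_k(x)$ is nonzero. Since a point killed by any layer stays dead thereafter, we have $L_k \subseteq L_{k-1}$, and the law of total probability gives $P(L_k) = P(L_k \mid L_{k-1})\,P(L_{k-1})$. It therefore suffices to establish the base case $P(L_1) = 1 - 2^{-n}$ and the conditional factor $P(L_k \mid L_{k-1}) = 1 - 2^{-n}$; the claimed value $\ell(n,k) = (1 - 2^{-n})^k$ then follows immediately.

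For the base case, the input to layer $1$ is $x$ itself, which is nonzero by hypothesis. By the computation in section \ref{sec:Lower-bound}, each of the $n$ neurons of the first layer places $x$ in its dead set with probability exactly $1/2$, and the neurons are independent because the rows of $(A_1,b_1)$ are IID. Hence the probability that all $n$ neurons kill $x$ is $2^{-n}$, giving $P(L_1) = 1 - 2^{-n}$.

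For the inductive step, write $z_{k-1} = F_{k-1}(x)$ for the point presented to layer $k$, a random vector depending only on the parameters of layers $1$ through $k-1$. First I would observe that on the event $L_{k-1}$ the vector $z_{k-1}$ is nonzero: survival through layer $k-1$ requires at least one neuron of that layer to be active at its input, so $F_{k-1}(x)$ has a strictly positive coordinate. I would then condition on the realized value $z_{k-1} = y$. Because the parameters $(A_k,b_k)$ are independent of all earlier layers, the event $z_{k-1} \in D_k$ depends only on fresh randomness, and by the value-independence established in section \ref{sec:Lower-bound} each of the $n$ neurons kills a nonzero $y$ with probability $1/2$, independently. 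Thus $P(z_{k-1} \in D_k \mid z_{k-1} = y) = 2^{-n}$ for every $y \ne 0$, and integrating over the conditional law of $z_{k-1}$ given $L_{k-1}$ yields $P(L_k \mid L_{k-1}) = 1 - 2^{-n}$.

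The main obstacle I anticipate is making this conditioning fully rigorous: the landing point $z_{k-1}$ is itself random and statistically dependent on the event $L_{k-1}$, so the killing probability must be shown to be conditionally independent of this history given the location, and uniformly equal to $2^{-n}$ over all admissible nonzero locations. The cleanest route is to invoke the independence of layer $k$'s parameters from the earlier layers together with the fact, proved in section \ref{sec:Lower-bound}, that the single-neuron dead probability is exactly $1/2$ for every nonzero point; the tower property then removes the dependence on $z_{k-1}$ without any distributional assumption on $x$ beyond $x \ne 0$. The only degenerate case to exclude is $y = 0$ with identically zero biases, which the nonzero-coordinate observation rules out along the surviving trajectory.
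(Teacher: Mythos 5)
Your proof is correct and follows essentially the same route as the paper's own argument, which is just a terse sketch of the chain you spell out: symmetry gives the $1/2$ per-neuron kill probability, independence of rows gives $2^{-n}$ per layer, and conditional independence of fresh layer parameters extends this across $k$ layers. Your version is in fact more careful than the paper's, since you make the conditioning rigorous via the tower property and explicitly verify that the surviving point $F_{k-1}(x)$ has a positive coordinate, which is needed to rule out the degenerate $y=0$, zero-bias case that the paper only handles implicitly through the hypothesis $x\ne 0$.
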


\begin{proof}

The argument is essentially the same as that of section \ref{sec:Lower-bound}.
By symmetry, the probability that a $x$ is dies in a single neuron
is $1/2$. By independence, the probability that $x$ dies in a single
layer is $1-2^{-n}$. Again by conditional independence, this extends
to $k$ layers as $P(x\in D)=(1-2^{-n})^{k}$.

\end{proof}

Equipped with this result, it is easy to see that $P(n,k)\rightarrow\ell(n,k)$
as the input set shrinks in size to a single point. This is made rigorous
in the following theorem.

\begin{theorem}

Let $T_{1}\supseteq T_{2}\supseteq\dots$ be a sequence of nested
sets with limit $x\in\mathbb{R}^{n}\backslash\{0\}$. Let the input
set be $S_{0}=T_{i}$, the $i^{th}$ set from this sequence. Then
$P(n,k)\rightarrow\ell(n,k)$ monotonically as $i\rightarrow\infty$.

\end{theorem}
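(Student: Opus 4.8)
The plan is to realize the shrinking input sets as a nested decreasing sequence of events in parameter space and apply continuity of probability from above, then identify the limiting event with the event that the single point $x$ is alive, up to a null set. For a fixed parameter vector $\theta$, write $L_{k}(\theta)\subseteq\mathbb{R}^{n}$ for the set of input points that are alive through $k$ layers, and for an input set $T$ let $E(T)=\{\theta:T\cap L_{k}(\theta)\neq\emptyset\}$ be the event that the network is alive with $S_{0}=T$, so that $P(n,k)=P(E(T_{i}))$ when $S_{0}=T_{i}$. Since $T_{1}\supseteq T_{2}\supseteq\cdots$, the events are nested, $E(T_{1})\supseteq E(T_{2})\supseteq\cdots$, which immediately gives monotonicity of $P(E(T_{i}))$. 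Moreover $x\in T_{i}$ for every $i$, so $\{\theta:x\in L_{k}(\theta)\}\subseteq E(T_{i})$, and by Lemma \ref{lemma:prob_x_alive} this yields $P(E(T_{i}))\ge\ell(n,k)$ for all $i$. Continuity of the probability measure from above then gives $\lim_{i\to\infty}P(E(T_{i}))=P\left(\bigcap_{i=1}^{\infty}E(T_{i})\right)$, so it remains only to evaluate this limiting event.

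The crux is the sandwich $\{x\in L_{k}\}\subseteq\bigcap_{i}E(T_{i})\subseteq\{x\in\overline{L_{k}}\}$. The left inclusion is immediate from $x\in T_{i}$. For the right inclusion, if $\theta\in\bigcap_{i}E(T_{i})$ then for each $i$ there is a living point $y_{i}\in T_{i}$; because the $T_{i}$ shrink to the single point $x$, we have $y_{i}\to x$, so $x\in\overline{L_{k}(\theta)}$. Hence the limiting probability lies between $P(x\in L_{k})=\ell(n,k)$ and $P(x\in\overline{L_{k}})$, and the theorem follows once I show these coincide, i.e.~that the boundary is hit with probability zero: $P(x\in\partial L_{k}(\theta))=0$, where $\partial L_{k}=\overline{L_{k}}\setminus L_{k}$ since $L_{k}$ is open.

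The main obstacle, and the technical heart of the proof, is this boundary estimate. I would first record that $L_{k}(\theta)$ is open: the dead set of each neuron is closed, so the complement of the $k^{th}$ layer's dead set pulls back under the continuous map $F_{k-1}$ to an open set, and intersecting with the (inductively open) living set of the first $k-1$ layers keeps it open. Next I would establish the recursive inclusion $\partial L_{k}\subseteq\partial L_{k-1}\cup\left(L_{k-1}\cap\{x:\tilde{F}_{k,l}(x)=0\text{ for some }l\}\right)$: a new boundary point of $L_{k}$ that survives the first $k-1$ layers must have some layer-$k$ preactivation vanishing, by continuity of the $\tilde{F}_{k,l}$ together with the definition of the dead set in equation \ref{eq:dead_set_neuron}. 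The key observation is that on the event $\{x\in L_{k-1}\}$ the image $w=F_{k-1}(x)$ is nonzero, since an alive point has at least one strictly active neuron in layer $k-1$; conditioning on the first $k-1$ layers, the layer-$k$ parameters $(a_{k,l},b_{k,l})$ are independent with an absolutely continuous density, so $\{a_{k,l}\cdot w+b_{k,l}=0\}$ has conditional probability zero whenever $w\neq0$. A union bound over the $n$ neurons then kills the new contribution, and induction on $k$ --- with the base case $k=1$ handled directly, using $x\neq0$ so that $a_{1,l}\cdot x+b_{1,l}=0$ is a measure-zero hyperplane even when the biases vanish --- yields $P(x\in\partial L_{k})=0$. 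The delicate point throughout is the case $w=0$, which corresponds to $x$ already being dead before layer $k$; I would handle it precisely by noting that such points land in $\partial L_{k-1}$ and are therefore absorbed into the inductive hypothesis rather than contributing a new positive-probability event. Combining the boundary estimate with the continuity-from-above argument gives $\lim_{i\to\infty}P(E(T_{i}))=\ell(n,k)$ monotonically, as claimed.
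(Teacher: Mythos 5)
Your proof is correct, and at the top level it follows the same route as the paper: realize the shrinking input sets as a monotone family of events in parameter space, pass to the limit by continuity of probability, and identify the limiting probability with $\ell(n,k)$ via Lemma \ref{lemma:prob_x_alive}. The difference lies in the limiting step, and there your version supplies something the paper's proof genuinely lacks. The paper reduces to Theorem \ref{theorem: prob_converge_1} by asserting that $\mu(T)=P(n,k)$ with input set $T$ ``satisfies the axioms of a probability measure'' and then invoking continuity from above; but as your formulation $E(T)=\{\theta:T\cap L_{k}(\theta)\neq\emptyset\}$ makes plain, this set function is monotone yet not additive (disjoint input sets can simultaneously contain living points), so it is a capacity rather than a measure. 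Continuity from below still holds trivially, because $E(\cup_{i}T_{i})=\cup_{i}E(T_{i})$, which is why the increasing case of Theorem \ref{theorem: prob_converge_1} goes through; the decreasing analogue $E(\cap_{i}T_{i})=\cap_{i}E(T_{i})$ is exactly what can fail, since the living witnesses $y_{i}\in T_{i}$ may converge to a point lying only in $\overline{L_{k}}$. Your sandwich $\{x\in L_{k}\}\subseteq\cap_{i}E(T_{i})\subseteq\{x\in\overline{L_{k}}\}$, together with the inductive null-boundary estimate $P(x\in\partial L_{k})=0$ (using openness of $L_{k}$, absolute continuity of the parameter density, nonvanishing of $F_{k-1}(x)$ on the event $\{x\in L_{k-1}\}$, and $x\neq0$ in the base case), closes precisely this gap; in effect your proof is the rigorous version of the paper's two-line sketch, and the boundary lemma is the real content. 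One caveat applies to both arguments: ``limit $x$'' must be read as the $T_{i}$ shrinking to $x$ topologically (diameters tending to zero, as with the balls $S_{0}(x,r_{i})$ implicit in the paper's notation), not merely $\cap_{i}T_{i}=\{x\}$; otherwise your step $y_{i}\rightarrow x$ fails, and so does the theorem itself --- for instance, with $k=1$ and $T_{i}=\{x\}\cup\{z:\|z\|\geq i\}$, every shell almost surely contains a living point, so $P(E(T_{i}))\equiv1>\ell(n,1)$.
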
\label{theorem:converge_to_lb_r_shrinks}

\begin{proof}

The proof is the same as that of theorem \ref{theorem: prob_converge_1},
except we take a decreasing sequence. Then, using the same notation
as in that proof, $\mu(S_{0}(x,r_{i}))\rightarrow\mu(x)$. Finally,
lemma \ref{lemma:prob_x_alive} tells us that $\mu(x)=\ell(n,k)$. 

\end{proof}

We have shown that $P(n,k)=\ell(n,k)$ when the input set is a single
point. Informally, it is easy to believe that $P(n,k)\rightarrow\ell(n,k)$
as the network grows deeper, for any input set. The previous theorem
handles the case that the input set shrinks, and we can often view
a deeper layer as one with a shrunken input set, as the network tends
to constrict the inputs to a smaller range of values. To formalize
this notion, define the \textbf{living set }for parameters $\theta$
as $L_{k}(\theta)=S_{0}\cap F_{1}^{-1}(D_{1})^{c}\cap\dots\cap F_{k}^{-1}(D_{k})^{c}$.
Then we can state the following lemma and theorem.

\begin{lemma}

The expected measure of the living set $L_{k}$ over all possible
network parameters $\theta$ is equal to $\ell(n,k).$

\end{lemma}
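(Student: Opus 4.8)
The plan is to reduce the claim to the single-point result of Lemma \ref{lemma:prob_x_alive} by writing the expected measure as a double integral over the parameters $\theta$ and the input points $x$, and then interchanging the order of integration. Let $\mu$ denote the probability distribution of the input datum $x$ on $S_{0}$, which we assume carries no atom at the origin, i.e.~$\mu(\{0\})=0$. For a fixed parameter draw $\theta$, the measure of the living set is the integral of an indicator,
\[
\mu(L_{k}(\theta))=\int_{S_{0}}\mathbf{1}\left[x\in L_{k}(\theta)\right]\,d\mu(x),
\]
so that the quantity of interest is the expectation $\mathbb{E}_{\theta}\left[\mu(L_{k}(\theta))\right]$ of this integral over the random parameters.

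First I would interchange the expectation over $\theta$ with the integral over $x$. Because the integrand is nonnegative and bounded, Tonelli's theorem justifies the swap, provided the map $(x,\theta)\mapsto\mathbf{1}\left[x\in L_{k}(\theta)\right]$ is jointly measurable. This follows from the defining expression $L_{k}(\theta)=S_{0}\cap F_{1}^{-1}(D_{1})^{c}\cap\dots\cap F_{k}^{-1}(D_{k})^{c}$: each dead set $D_{j}$ is closed, being a finite intersection of half-spaces, and each composed map $F_{j}$ is continuous in both $x$ and $\theta$, so the membership event is Borel in the product space.

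After the swap, the inner expectation is precisely the probability that the fixed point $x$ survives all $k$ layers,
\[
\mathbb{E}_{\theta}\left[\mathbf{1}\left[x\in L_{k}(\theta)\right]\right]=P_{\theta}\left(x\text{ is alive at layer }k\right),
\]
which by Lemma \ref{lemma:prob_x_alive} equals $\ell(n,k)$ for every $x\neq0$. Since $\{0\}$ carries no $\mu$-mass, this value is constant $\mu$-almost everywhere, and integrating the constant against the probability measure $\mu$ yields
\[
\mathbb{E}_{\theta}\left[\mu(L_{k}(\theta))\right]=\int_{S_{0}}\ell(n,k)\,d\mu(x)=\ell(n,k),
\]
as claimed.

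The main obstacle is the rigorous justification of the Fubini--Tonelli interchange, namely the joint measurability of living-set membership in the pair $(x,\theta)$; this is technical but routine given the continuity of the composed layer maps and the closedness of the dead sets. A secondary point is to confirm that the lone exceptional point $x=0$ of Lemma \ref{lemma:prob_x_alive} contributes nothing to the integral, which holds for any input distribution without an atom at the origin, in particular any absolutely continuous one.
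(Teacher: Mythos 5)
Your proposal is correct and follows essentially the same route as the paper: express the expected measure as a double integral, interchange the order of integration by Fubini--Tonelli, and apply Lemma \ref{lemma:prob_x_alive} to the inner probability. The only difference is that you spell out the joint-measurability justification and the null contribution of the point $x=0$, details the paper's two-line proof leaves implicit.
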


\begin{proof}

Write the expected measure as $\mathbb{E_{\theta}}\int_{x\in S_{0}}f(x)P(x\in L_{k})$.
Then apply Fubini's theorem followed by lemma \ref{lemma:prob_x_alive}.

\end{proof}

\begin{theorem}

Let $n(k)$ be a path through hyperparameter space such that $n$
is a non-decreasing function of $k$, and $\ell(n(k),k)\rightarrow0$.
Then $P(n(k),k)\rightarrow0$ as well.

\end{theorem}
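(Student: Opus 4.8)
The plan is to upgrade the first-moment identity established in the lemma just above --- that $\mathbb{E}_{\theta}\!\left[\mu(L_{k})\right]=\ell(n,k)$ --- into genuine control of $P(n,k)$, since the crude sandwich of equation \ref{eq:both_bounds} is useless in this regime: $\ell(n(k),k)\to0$ does not force the upper bound to vanish, because $\upsilon$ carries $n^{2}$ rather than $n$ in its exponent. Indeed, a slowly growing width such as $n(k)\sim(\log_{2}k)^{3/4}$ drives $\ell\to0$ while $\upsilon\to1$, so the conclusion cannot follow from equation \ref{eq:upper_bound} alone. The first step is therefore to restate ``alive'' measure-theoretically. The network is alive exactly when $L_{k}(\theta)\neq\emptyset$; since each dead set $D_{j}$ is closed and each $F_{j}$ is continuous, every $F_{j}^{-1}(D_{j})^{c}$ is open, so $L_{k}$ is relatively open in $S_{0}$. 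Hence, for an input distribution with full support on $S_{0}$, we have $L_{k}\neq\emptyset$ if and only if $\mu(L_{k})>0$, giving the clean reformulation $P(n,k)=P_{\theta}\!\left(\mu(L_{k})>0\right)$.

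The second step is a Markov-type bridge. We know $0\le\mu(L_{k})\le1$ and $\mathbb{E}_{\theta}[\mu(L_{k})]=\ell(n(k),k)\to0$. If we can produce a deterministic threshold $c_{k}>0$ with the property that $\mu(L_{k})>0$ already forces $\mu(L_{k})\ge c_{k}$, then Markov's inequality yields
\[
P(n(k),k)=P_{\theta}\!\left(\mu(L_{k})\ge c_{k}\right)\le\frac{\mathbb{E}_{\theta}[\mu(L_{k})]}{c_{k}}=\frac{\ell(n(k),k)}{c_{k}},
\]
which tends to $0$ as soon as $c_{k}$ decays no faster than $\ell(n(k),k)$ up to a vanishing factor. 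This is precisely where the output-variance hypothesis must enter: it has to guarantee that a non-empty living region cannot be arbitrarily thin, i.e.~that bounded (or sufficiently slowly decaying) variance of the pre-ReLU responses $\tilde{F}_{k}$ prevents the surviving inputs from being squeezed into a sliver of measure far below $\ell$.

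Geometrically this formalizes the remark following theorem \ref{theorem:converge_to_lb_r_shrinks}: a deeper layer behaves like one fed a shrunken input set, so lemma \ref{lemma:prob_x_alive} together with the contraction of the image $S_{k}$ should pin $P$ to $\ell$. I expect the construction of $c_{k}$ to be the main obstacle, and it is genuinely unavoidable: the counterexample above shows that without an assumption the living set can be non-empty yet of vanishing measure, decoupling $P(n,k)$ from $\ell(n,k)$. The hard part is thus converting the qualitative fact ``some input survives'' into the quantitative fact ``a set of inputs of controlled measure survives,'' which is exactly the content the output-variance assumption is designed to supply.

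Finally, the easy regime can be dispatched without any assumption, which is worth isolating at the start. Since $n(k)$ is non-decreasing, it is either eventually constant at some $n^{*}$ or tends to infinity. In the bounded case, $P(n(k),k)\le\upsilon(n^{*},k)=(1-2^{-(n^{*})^{2}-n^{*}})^{k-1}\to0$ directly from equation \ref{eq:upper_bound}, so the upper bound already suffices; the substantive work is confined to the case $n(k)\to\infty$, which is handled by the Markov argument above.
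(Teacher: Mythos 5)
Your setup matches the paper's in two respects: both reformulate aliveness as $P(n,k)=P_{\theta}\left(\mu(L_{k})>0\right)$ via openness of $L_{k}$, and both lean on the preceding lemma $\mathbb{E}_{\theta}[\mu(L_{k})]=\ell(n,k)$. Your dispatch of the eventually-constant-width case via the upper bound is also correct. But in the substantive case $n(k)\rightarrow\infty$ your argument has a genuine gap, one you flag yourself: the deterministic threshold $c_{k}>0$ with the property ``$\mu(L_{k})>0$ forces $\mu(L_{k})\ge c_{k}$'' is never constructed, and no such threshold exists in general. As $\theta$ varies continuously, the living set can be a non-empty sliver of arbitrarily small measure, so $\inf\{\mu(L_{k}(\theta)):\mu(L_{k}(\theta))>0\}=0$ and the Markov bridge cannot be closed as stated. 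Worse, you propose to rescue the threshold with an ``output-variance hypothesis,'' but the theorem being proved has no such hypothesis: Assumption \ref{assumption:variance} belongs to the \emph{next} subsection's result (tightness along $n(k)\rightarrow\infty$ without the condition $\ell\rightarrow0$), and the paper states explicitly that the present theorem concludes its purely measure-theoretic results. Conditioning your proof on that assumption both weakens the theorem and essentially duplicates the later argument, where a probabilistic version of your dichotomy (``all remaining data lives or none does'') is indeed derived via Chebyshev's inequality.

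What the paper does instead, and what your attempt never exploits, is the hypothesis that $n(k)$ is non-decreasing: this makes the living sets $L_{k}$ a nested decreasing sequence (viewed in a common product space over $n$, $k$ and $\theta$), so the indicator of $\{\mu(L_{k}(\theta))>0\}$ is pointwise non-increasing in $k$. The lemma then gives that the limiting set $\cap_{k=1}^{\infty}L_{k}$ has expected measure $\lim_{k}\ell(n(k),k)=0$, and the paper concludes $P(n(k),k)\rightarrow0$ by the monotone convergence theorem, with no statistical assumption at all. In your write-up the monotonicity of $n(k)$ is used only to split into bounded versus unbounded cases, never to obtain nestedness, which is precisely the structural fact that replaces your missing $c_{k}$. (One may quibble that the paper's final step is itself delicate, since nested non-empty open sets can have empty or null intersection, but that monotone-convergence route is the paper's proof, and reproducing or repairing it requires the nestedness you omitted rather than the variance assumption you imported.)
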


\begin{proof}

For increasing $k$ with fixed $n$, note that $L_{k}$ is a decreasing
nested sequence of sets. To move beyond fixed $n$, consider the infinite
product space generated by all possible values of $n$, $k$ and $\theta$.
In this space, $L_{k}$ is also a nested decreasing sequence, as $n$
is non-decreasing with $k$. By the lemma, $\ell(n(k),k)\rightarrow0$
implies the limiting set $\cap_{k=1}^{\infty}L_{k}$ has measure zero.
Then, we know that $\mu(L_{k})\rightarrow0$ monotonically as well. 

Next, recall from section \ref{sec:Lower-bound} that dead sets are
closed, and since the network is continuous we know that $L_{k}$
is either empty or open, as it is a finite intersection of such sets.
Since the input data are drawn from a continuous distribution, it
follows that $\mu(L_{k})>0$ if and only if $L_{k}\ne\emptyset$.
Then, we can write
\[
P(n,k)=\mathbb{E_{\theta}}\begin{cases}
1, & \mu(L_{k}(\theta))>0\\
0, & \mbox{otherwise.}
\end{cases}
\]
From this formulation, the integrand is monotone in $k$, so $P(n(k),k)\rightarrow0$
by the monotone convergence theorem. 

\end{proof}

See figure \ref{fig:lb_plots} for example paths $n(k)$ satisfying
the conditions of this theorem. In general, our bounds imply that
deeper networks must be wider, so all $n(k)$ of practical interest
ought to be non-decreasing. This concludes our results from measure
theory. In the next section, we adopt a statistical approach to remove
the requirement that $\ell(n,k)\rightarrow0$.

\subsection{Tightness of the lower bound as $n(k)\rightarrow\infty$}

We have previously shown that $P(n,k)$ converges to the upper bound
$\nu(n,k)$ as the input set grows to $\mathbb{R}^{n}$ with $k=1$,
and that it converges to the lower bound $\ell(n,k)$ as the input
set shrinks to a single point, for any $k$. Finally, we argue that
$p(n,k)\rightarrow\ell(n,k)$ as $n$ and $k$ increase without limit,
without reference to the input set. We have already shown that this
holds by an expedient argument in the special case that $\ell(n,k)\rightarrow0$.
In what follows, we drop this assumption and try to establish the
result for the general case, substituting an assumption on $\ell(n,k)$
for one on the statistics of the network outputs.

\begin{assumption}{}\label{assumption:variance}Let $\sigma(F_{k}(x)|\theta)$
denote the vector of standard deviations of each element of $F_{k}(x)$,
which varies with respect to the input data $x$, conditioned on parameters
$\theta$. Let $\lambda_{k}=\mathbb{E}[F_{k}(x)|\theta]$. Assume
the sum of normalized conditional variances, taken over all living
layers as
\begin{equation}
\Sigma(n)=\sum_{k=1}^{\infty}\mathbb{E}_{\theta|A_{k}}\frac{\|\sigma(F_{k}(x)|\theta)\|^{2}}{\|\lambda_{k}\|^{2}}\label{eq:-3}
\end{equation}
has $\Sigma(n)\in o(2^{n})$. \end{assumption}{}

This basically states that the normalized variance decays rapidly
with the number of layers $k$, and it does not grow too quickly with
the number of features $n$. The assumption holds in the numerical
simulations of section \ref{sec:Experimental-validation}. In what
follows, we argue that this is a reasonable assumption, based on some
facts about the output variance of a ReLU network.

First we compute the output variance of the affine portion of a ReLU
neuron. Assume the input features have zero mean and identity covariance
matrix, i.e.~the data have undergone a whitening transformation.
Let $\sigma_{\theta}^{2}$ denote the variance of each network parameter,
and let $F_{1,j}(x)$ denote the output of the $j^{th}$ neuron in
the first layer, with $\tilde{F}_{1,j}(x)$, $\lambda_{j}$ and parameters
$(a_{j},b_{j})$ defined accordingly. Then 
\begin{align}
\mathbb{E}_{\theta}[\sigma^{2}(\tilde{F}_{1,j}|\theta)] & =\sum_{i=1}^{n}\mathbb{E}[a_{j,i}^{2}]\sigma^{2}(x_{i})\label{eq:-4}\\
 & =n\sigma_{\theta}^{2}.\nonumber 
\end{align}
This shows that the pre-ReLU variance does not increase with each
layer so long as $\sigma_{\theta}^{2}\le1/n$, the variance used in
the Xavier initialization scheme \citep{Glorot:2010:XavierInit}.
A similar computation can be applied to later layers, canceling terms
involving uncorrelated features. 

To show that the variance actually decreases rapidly with $k$, we
must factor in the ReLU nonlinearity. The basic intuition is that,
as data progresses through a network, information is lost when neurons
die. For a deep enough network, the response of each data point is
essentially just $\lambda_{k}$, which we call the eigenvalue of the
network. The following result makes precise this notion of information
loss.

\begin{lemma}{}\label{lem:relu_variance}Given a rectifier neuron
$F$ with input data $x$ and parameters $\theta$, the expected output
variance is related to the pre-ReLU variance by

\begin{align*}
\mathbb{E}_{\theta}\sigma^{2}(F(x)|\theta) & =\frac{1}{2}\mathbb{E}_{\theta}\sigma^{2}(\tilde{F}(x)|\theta)-\mathbb{E}_{\theta}\lambda^{2}.
\end{align*}

\end{lemma}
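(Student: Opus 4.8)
The plan is to reduce everything to a joint expectation over both the parameters $\theta$ and the data $x$, where the sign-flip symmetry of the parameter distribution used in Section \ref{sec:Lower-bound} can be brought to bear. Writing $F=\max\{\tilde{F},0\}$, I would first decompose the conditional variance as $\sigma^{2}(F(x)|\theta)=\mathbb{E}_{x}[F(x)^{2}|\theta]-\lambda^{2}$, where $\lambda=\mathbb{E}_{x}[F(x)|\theta]$. Taking $\mathbb{E}_{\theta}$ and using the tower property gives $\mathbb{E}_{\theta}\sigma^{2}(F|\theta)=\mathbb{E}_{\theta,x}[F^{2}]-\mathbb{E}_{\theta}\lambda^{2}$, so the entire task collapses to proving $\mathbb{E}_{\theta,x}[F^{2}]=\tfrac{1}{2}\mathbb{E}_{\theta}\sigma^{2}(\tilde{F}|\theta)$.

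The second step exploits the half-rectification. Since $F^{2}=\tilde{F}^{2}\mathbf{1}_{\{\tilde{F}\ge0\}}$, the central claim is that the positive and negative parts of $\tilde{F}$ contribute equally, i.e. $\mathbb{E}_{\theta,x}[\tilde{F}^{2}\mathbf{1}_{\{\tilde{F}\ge0\}}]=\tfrac{1}{2}\mathbb{E}_{\theta,x}[\tilde{F}^{2}]$. This is exactly the symmetry argument of Section \ref{sec:Lower-bound} reappearing at the level of second moments: the map $\theta\mapsto-\theta$ preserves the law of $\theta$ (symmetric distribution), acts independently of $x$, and sends $\tilde{F}=a\cdot x+b$ to $-\tilde{F}$, so the joint laws of $(\tilde{F},x)$ and $(-\tilde{F},x)$ coincide. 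Hence $\mathbb{E}[\tilde{F}^{2}\mathbf{1}_{\{\tilde{F}\ge0\}}]=\mathbb{E}[\tilde{F}^{2}\mathbf{1}_{\{\tilde{F}\le0\}}]$, and because $\{\tilde{F}=0\}$ is a measure-zero hyperplane (as noted in Section \ref{sec:Lower-bound}), summing the two recovers $\mathbb{E}[\tilde{F}^{2}]$ and yields the factor $\tfrac{1}{2}$.

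The third step converts $\mathbb{E}_{\theta,x}[\tilde{F}^{2}]$ back into a conditional variance via the law of total variance: $\mathbb{E}_{\theta,x}[\tilde{F}^{2}]=\mathbb{E}_{\theta}\sigma^{2}(\tilde{F}|\theta)+\mathbb{E}_{\theta}(\mathbb{E}_{x}[\tilde{F}|\theta])^{2}$. Under the standing whitening assumption $\mathbb{E}_{x}[x]=0$ together with the no-bias convention $b=0$, the pre-ReLU conditional mean $\mathbb{E}_{x}[\tilde{F}|\theta]=a\cdot\mathbb{E}_{x}[x]+b$ vanishes, so the second term drops and $\mathbb{E}_{\theta,x}[\tilde{F}^{2}]=\mathbb{E}_{\theta}\sigma^{2}(\tilde{F}|\theta)$. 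Chaining the three steps gives the stated identity.

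The main subtlety is bookkeeping of which source of randomness each operation acts on. The ReLU symmetry is an identity in the \emph{joint} law of $(\theta,x)$, so one must first pass from the conditional second moment $\mathbb{E}_{x}[F^{2}|\theta]$ to the joint $\mathbb{E}_{\theta,x}[F^{2}]$ before the sign flip is legitimate; and the return to conditional-variance form is precisely where the zero pre-ReLU mean hypothesis is needed, to discard the $(\mathbb{E}_{x}[\tilde{F}|\theta])^{2}$ term. I would flag this hypothesis explicitly, since for deeper layers — whose inputs are nonnegative and hence not zero-mean — the identity is an approximation rather than exact, which is consistent with the lemma's role as heuristic support for Assumption \ref{assumption:variance}.
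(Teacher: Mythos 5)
Your proposal is correct and follows essentially the same route as the paper's proof: decompose the conditional variance as $\sigma^{2}(F(x)|\theta)=\mathbb{E}_{x}[F^{2}(x)|\theta]-\lambda^{2}$, pass to the joint expectation $\mathbb{E}_{\theta,x}[F^{2}]$ by Fubini, and use the sign-flip symmetry $\theta\mapsto-\theta$ to obtain $\mathbb{E}_{\theta,x}[F^{2}]=\tfrac{1}{2}\mathbb{E}_{\theta,x}[\tilde{F}^{2}]$. The one place you genuinely diverge is the final step, and there your version is tighter than the paper's. The paper eliminates the pre-ReLU mean term by asserting that odd symmetry of $\tilde{F}$ yields $\mathbb{E}_{\theta}\tilde{\lambda}=0$; but what must vanish in the identity $\mathbb{E}_{\theta,x}[\tilde{F}^{2}]=\mathbb{E}_{\theta}\sigma^{2}(\tilde{F}|\theta)+\mathbb{E}_{\theta}\tilde{\lambda}^{2}$ is the second moment $\mathbb{E}_{\theta}\tilde{\lambda}^{2}$, which a vanishing first moment does not deliver. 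You instead invoke the whitening hypothesis $\mathbb{E}_{x}[x]=0$ together with the zero-bias convention, so that $\tilde{\lambda}=a\cdot\mathbb{E}_{x}[x]+b=0$ identically and the term drops exactly; this is the correct way to close the argument, and it matches the setting the paper establishes just before the lemma without citing it inside the proof. Your explicit caveat that the identity is only approximate for deeper layers, whose inputs are nonnegative and hence not zero-mean, is also a point the paper leaves implicit.
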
{}

\begin{proof}

From the definition of variance,
\begin{equation}
\sigma^{2}(F(x)|\theta)=\mathbb{E}_{x}[F^{2}(x)|\theta]-\lambda^{2}.\label{eq:cond_variance}
\end{equation}
Taking the expected value over $\theta$, we have that $\mathbb{E}_{\theta}\mathbb{E}_{x}[F^{2}(x)|\theta]=\mathbb{E}_{x}\mathbb{E}_{\theta}[F^{2}(\theta)|x]$,
both being equal to $\mathbb{E}_{\theta,x}[F^{2}(x,\theta)]$. For
a symmetric zero-mean parameter distribution $\rho(\theta)$ we have
\[
\mathbb{E}_{\theta}\left[F^{2}(\theta|x)\right]=\int_{\tilde{F}(x,\theta)>0}\tilde{F}^{2}(x,\theta)\rho(\theta)d\theta.
\]
As in section \ref{sec:Lower-bound}, symmetry tells us that the integral
over $\tilde{F}>0$ is equal to that over $\tilde{F}\le0$, thus $\mathbb{E}_{\theta}\left[F^{2}(\theta|x)\right]=\frac{1}{2}\mathbb{E}_{\theta}\left[\tilde{F}^{2}(\theta|x)\right].$
Similarly, odd symmetry of $\tilde{F}$ yields $\mathbb{E}_{\theta}\tilde{\lambda}=0$,
so the pre-ReLU variance is just $\mathbb{E}_{\theta}(\tilde{F}^{2}(x|\theta))$.
Combining these results with equation \ref{eq:cond_variance} yields
the desired result.

\end{proof}

This shows that the average variance decays in each layer by a factor
of at least $1/2$, explaining the factor of $2$ applied to parameter
variances in the He initialization scheme \citep{He:2015:HeInitialization}.
However, the remaining $\lambda$ term shows that a factor of 2 is
insufficient, and variance decay will still occur, as seen in the
experiments of section \ref{sec:Experimental-validation}. This serves
both as a principled derivation of He initialization, as well as an
elucidation of its weaknesses. While this is not a complete proof
of assumption \ref{assumption:variance}, it offers strong evidence
for geometric variance decay.

After these preliminaries, we can now show the tightness of the lower
bound based on assumption \ref{assumption:variance}. To simplify
the proof, we switch to the perspective of a discrete IID dataset
$x_{1},\dots,x_{M}\in S_{0}$. Let $\mathcal{A}(k)$ denote the number
of living data points at layer $k$, and let $\mathcal{P}(n,k)=P(\mathcal{A}(k)>0)$.
It is a vacuous to study $k\rightarrow\infty$ with fixed $n$, as
$\nu,\ell\rightarrow0$ in this case. Similarly, $\nu,\ell\rightarrow1$
as $n\rightarrow\infty$ for fixed $k$. Instead, we consider any
path $n(k)\rightarrow\infty$ through hyperparameter space, which
essentially states that both $n$ and $k$ grow without limit. For
example, consider equation \ref{eq:width_formula_with_gamma} for
any $p\in(0,1)$, which gives the curve $n(k)$ such that $\ell(n(k),k)=p$.
Then, we show that $\mathcal{P}(n,k)\rightarrow\ell(n,k)$ as $n\rightarrow\infty$. 

\begin{proof}

Partition the parameter space into the following events, conditioned
on $\mathcal{A}_{k-1}>0$:
\begin{enumerate}
\item $E_{1}$: All remaining data lives at layer $k$: $\mathcal{A}(k)=\mathcal{A}(k-1)$
\item $E_{2}$: Only some of the remaining data lives at layer $k$: $0<\mathcal{A}(k)<\mathcal{A}(k-1)$
\item $E_{3}$: All the remaining data is killed: $0=\mathcal{A}(k)<\mathcal{A}(k-1)$.
\end{enumerate}
\begin{figure}
\includegraphics[scale=0.39]{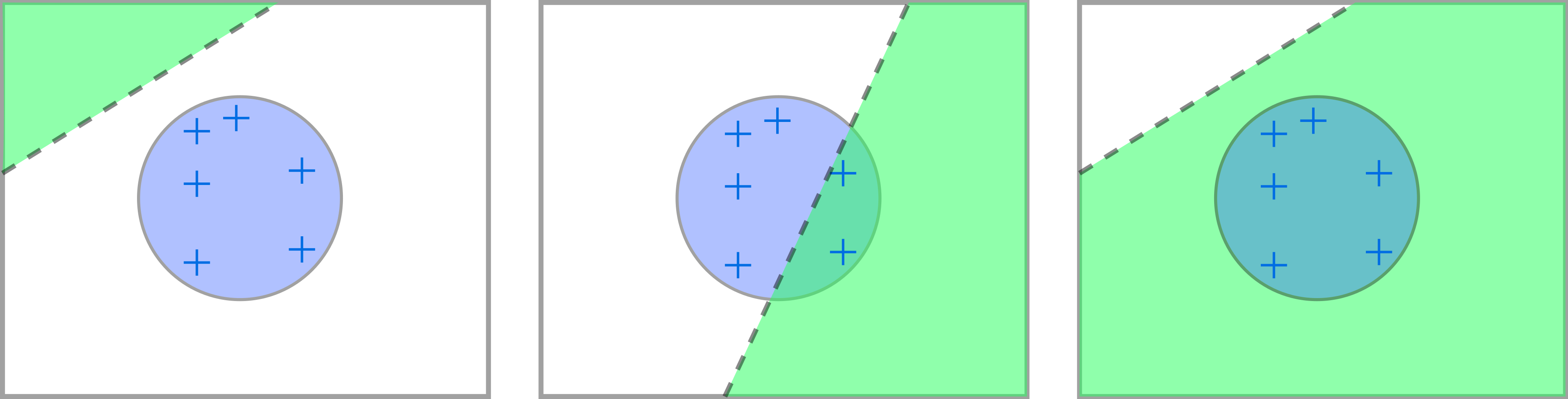}

\caption{Visualization of possible neuron death events, $E_{1}$, $E_{2}$
and $E_{3}$ from left to right. The blue circle represents the input
data distribution $S_{0}$, the crosses represent the actual discrete
data $x_{1},\dots,x_{M}$, and the green shaded region represents
the dead set. Note that for a single neuron, $E_{1}$ and $E_{3}$
are geometric complements, each having equal probability.}
\label{fig:events}
\end{figure}
These events are visualized in figure \ref{fig:events}. Now, as the
events are disjoint, we can write
\begin{align}
P(E_{1}) & =P(E_{1}|E_{2}^{C})\left(1-P(E_{2})\right).\label{eq:-9}
\end{align}

Now, conditioning on the compliment of $E_{2}$ means that either
all the remaining data is alive or none of it. If none of it, then
flipping the sign of any neuron brings all the remaining data to life
with probability one. As in section \ref{sec:Lower-bound}, symmetry
of the parameter distribution ensures all sign flips are equally likely.
Thus $P(E_{1}|E_{2}^{C})=1-2^{-n}$. Since $E_{1}$ and $E_{2}$ partition
$\{\mathcal{A}_{k}>0\}$, we have
\begin{align}
\mathcal{P}(n,k) & =\mathcal{P}(n,k-1)\left(P(E_{1})+P(E_{2})\right)\label{eq:-10}\\
 & =\mathcal{P}(n,k-1)\left((1-2^{-n})+2^{-n}P(E_{2})\right).\nonumber 
\end{align}
Recall the lower bound $\ell(n,k)=(1-2^{-n})^{k}.$ Expanding terms
in the recursion, and letting $E_{2}(j)$ be the analogous event for
layer $j$, we have that
\begin{align}
\frac{\mathcal{P}(n,k)}{\ell(n,k)} & =1+\frac{1}{2^{n}-1}\sum_{j=1}^{k}P(E_{2}(j))+\dots\label{eq:prob_over_lb_ellipsis}
\end{align}
Now, $E_{2}$ implies that $\|F_{j}(x_{m})\|=0$ for some $m\in\{1,\dots,M\}$
while $\|\lambda_{j}\|>0$. But, $F_{j}(x)=0$ if and only if $\|F_{j}(x)-\lambda_{j}\|\ge\lambda_{j}$.
Applying the union bound across data points followed by Chebyshev's
inequality, we have that
\begin{align}
P(E_{2}(j)) & \le M\mathbb{E}_{\theta|A_{j}}\frac{\|\sigma(F_{j}(x)|\theta)\|^{2}}{\|\lambda_{j}\|^{2}}.\label{eq:-11}
\end{align}
Therefore $\sum_{j=1}^{k}P(E_{2}(j))\le M\Sigma(n)$, the sum from
assumption \ref{assumption:variance}. Now, since all terms in equation
\ref{eq:prob_over_lb_ellipsis} are non-negative, we can collect them
in the bound
\begin{align}
\frac{\mathcal{P}(n,k)}{\ell(n,k)} & \le\sum_{h=0}^{\infty}\left(\frac{M\Sigma(n)}{2^{n}-1}\right)^{h}.\label{eq:-13}
\end{align}
This is a geometric series in $r(n)=M\Sigma(n)/(2^{n}-1)$, and by
assumption \ref{assumption:variance}, $r(n)\rightarrow0$ with $n$.

\end{proof}

This concludes our argument for the optimality of our lower bound.
Unlike in the previous section, here we required a strong statistical
assumption to yield a compact, tidy proof. The truth of assumption
\ref{assumption:variance} depends on the distribution of both the
input data and the parameters. Note that this is a sufficient rather
than necessary condition, i.e.~the lower bound might be optimal regardless
of the assumption. Note also that assumption \ref{assumption:variance}
is a proxy for the more technical condition that the sum $P(E_{2}(1))+P(E_{2}(2))+\dots$
is $o(2^{n})$. This means that the probability of the dataset partially
dying decreases rapidly with network depth. In other words, for deeper
layers, either all the data lives or none of it. There may be other
justifications for this result which dispense with the variance, such
as the contracting radius of the output set, or the increasing dimensionality
of the feature space $n$. We chose to work with the variance because
it roughly captures the size of the output set, and is much easier
to compute than the geometric radius or spectral norm.

\begin{figure}[H]
\begin{centering}
\includegraphics[scale=0.62]{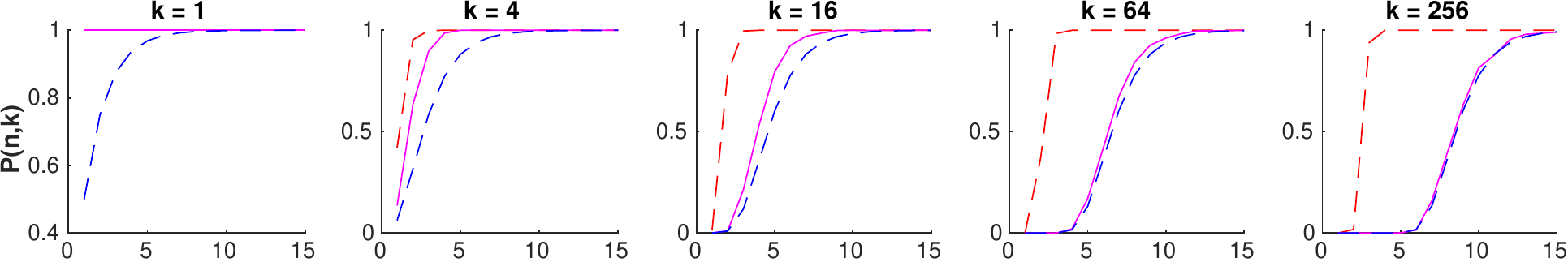}
\par\end{centering}
\caption{Probability of living network $P(n,k)$ (solid) versus bounds $\nu(n,k)$
and $\ell(n,k)$ (dashed).}
\label{fig:grid_results}
\end{figure}

\begin{figure}[t]
\begin{centering}
\includegraphics[scale=0.61]{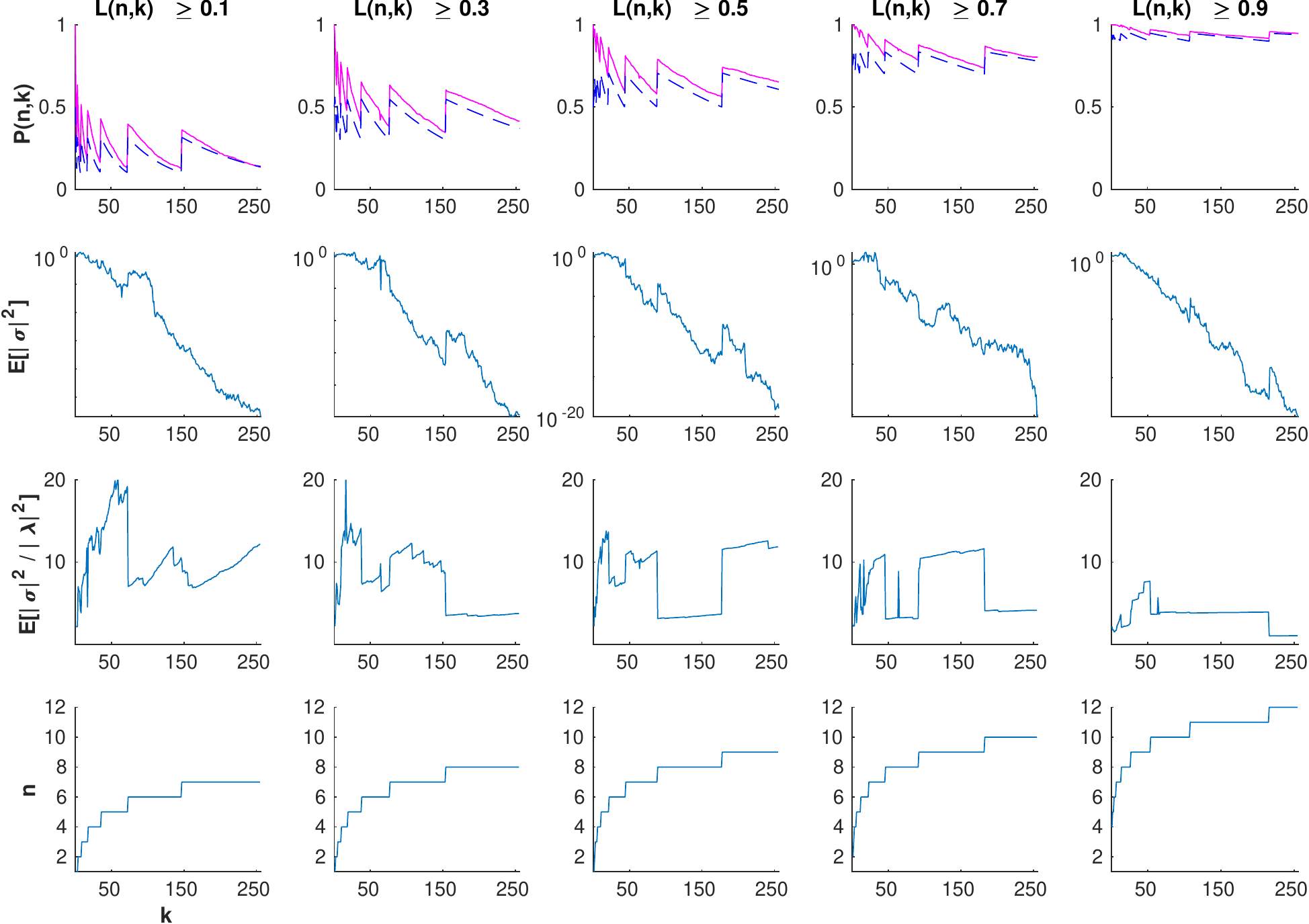}
\par\end{centering}
\caption{Top\textendash probability of living network $P(n,k)$ (solid) versus
lower bound $\ell(n,k)$ (dashed). Middle\textendash normalized variance
from assumption \ref{assumption:variance}, on a log scale. Bottom\textendash $n(k)$
derived from rounding equation \ref{eq:width_formula_with_gamma},
providing a roughly constant value of $\ell(n,k)$.}
\label{fig:lb_constant_versus_p}
\end{figure}

\section{Numerical simulations of bounds\label{sec:Experimental-validation}}

We now confirm our previously established bounds on neuron death by
numerical simulation. We followed the straightforward methodology
of generating a number of random networks, applying these to random
data, and computing various statistics on the output. All experiments
were conducted using SciPy and required several hours' compute time
on a standard desktop CPU \citep{Virtanen:2020:SciPy}.

First we generated the random data and networks. Following the typical
convention of using white training data, we randomly generated $1024$
data points from standard normal distributions in $\mathbb{R}^{n}$,
for each feature dimensionality $n\in\{1,\dots,15\}$ \citep{Sola:1997:dataNormalization}.
Then, we randomly initialized $1024$ fully-connected ReLU networks
having $1\le k\le256$ layers. Following the popular He initialization
scheme, we initialized the multiplicative network parameters with
normal distributions having variance $\sigma_{\theta}^{2}=2/n$, while
the additive parameters were initialized to zero \citep{He:2015:HeInitialization}.
Then, we computed the number of living data points for each network,
concluding that a network has died if no living data remain at the
output. The results are shown in figure \ref{fig:grid_results} alongside
the upper and lower bounds from inequality \ref{eq:both_bounds}.

Next, to show asymptotic tightness of the lower bound, we conducted
a similar experiment using the hyperparameters defined by equation
\ref{eq:width_formula_with_gamma}, rounding $n(k)$ upwards to the
next integer, for various values of $p$. These plots ensure that
the lower bound is approximately constant at $p$, while the upper
bound is approximately equal to one, that $P(n,k)$ converges to a
nontrivial value. The results are shown in figure \ref{fig:lb_constant_versus_p}.

We draw two main conclusions from these graphs. First, the simulations
agree with our theoretical bounds, as $\ell(n,k)\le P(n,k)\le\nu(n,k)$
in all cases. Second, the experiments support asymptotic tightness
of the lower bound, as $P(n,k)\rightarrow\ell(n,k)$ with increasing
$n$ and $k$. Interestingly, the variance decreases exponentially
in figure \ref{fig:lb_constant_versus_p}, despite the fact that He
initialization was designed to stabilize it \citep{He:2015:HeInitialization}.
This could be explained by the extra term in lemma \ref{lemma:prob_x_alive}.
However, we do not notice exponential decay after normalization by
the mean $|\lambda|^{2}$, as would be required by assumption \ref{assumption:variance}.
Recall that assumption \ref{assumption:variance} is a strong claim
meant to furnish a sufficient, yet unnecessary condition that $P(n,k)\rightarrow\nu(n,k)$.
That is, the lower bound might still be optimal even if the assumption
is violated, as suggested by the graphs.

\section{Sign-flipping and batch normalization\label{sec:Living-initialization-scheme}}

Our previous experiments and theoretical results demonstrated some
of the issues with IID ReLU networks. Now, we propose a slight deviation
from the usual initialization strategy which partially circumvents
the issue of dead neurons. As in the experiments, we switch to a discrete
point of view, using a finite dataset $x_{1},\dots,x_{M}\in\mathbb{R}^{n}$.
Our key observation is that, with probability one, negating the parameters
of a layer revives any data points killed by it. Thus, starting from
the first layer, we can count how many data points are killed, and
if this exceeds half the number which were previously alive, we negate
the parameters of that layer. This alters the joint parameter distribution
based on the available data, while preserving the marginal distribution
of each parameter. Furthermore, this scheme is practical, costing
essentially the same as a forward pass over the dataset.

For a $k$-layer network, this scheme guarantees that at least $\left\lfloor M/2^{k}\right\rfloor $
data points live. The caveat is that the pre-ReLU output cannot be
all zeros, or else negating the parameters would not change the output.
However, this edge case has probability zero since $\tilde{F}_{k}$
depends on a polynomial in the parameters of layer $k$, the roots
of which have measure zero. Batch normalization provides a similar
guarantee: if $\mathbb{E}[\tilde{F}_{k}(x)|\theta]=0$ for all $k$,
then with probability one there is at least one living data point
\citep{Ioffe:2015:batchNorm}. Similar to our sign-flipping scheme,
this prevents total network death while still permitting individual
data points to die. 

These two schemes are simulated and compared in figure \ref{fig:living_data_results}.
Note that for IID data $P(\alpha)=\ell(n,k)$, as that bound comes
from the probability that any random data point is alive. Sign-flipping
significantly increased the proportion of living data, far exceeding
the $2^{-k}$ fraction that is theoretically guaranteed. In contrast,
batch normalization ensured that the network was alive, but did nothing
to increase the proportion of living data over our IID baseline.

\begin{figure}[H]
\begin{centering}
\includegraphics[scale=0.61]{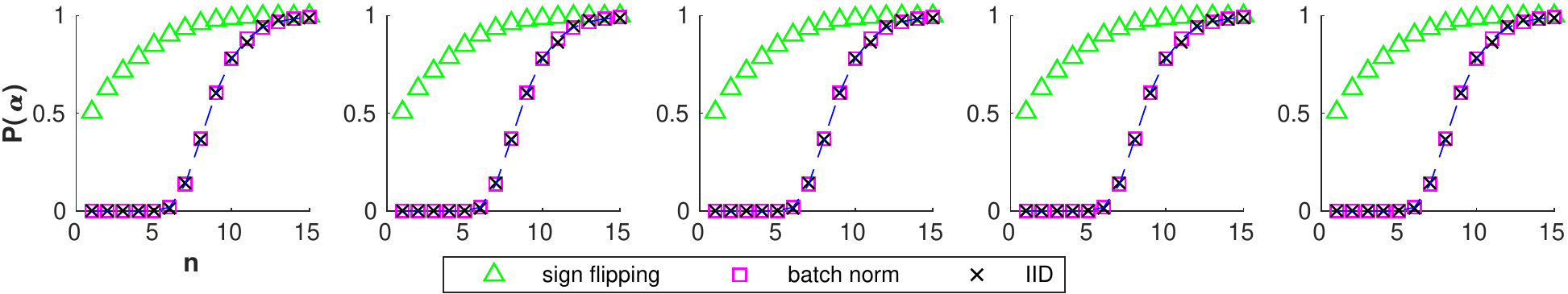}
\par\end{centering}
\caption{Probability of living data $P(\alpha)$ for various parameter distributions,
with lower bound $\ell(n,k)$ (dashed).}
\label{fig:living_data_results}
\end{figure}

\section{Implications for network design}

Neuron death is not only relevant for understanding how to initialize
a very deep neural network, but also helps to explain various aspects
of the model architecture itself. Equation \ref{eq:width_formula_with_gamma}
in section \ref{sec:Lower-bound} tells us how wide a $k$-layer network
needs to be; if a network is deep and skinny, most of the data will
die. We have also seen in section \ref{sec:Living-initialization-scheme}
that batch normalization prevents neuron death, supplementing the
existing explanations of its efficacy \citep{Santurkar:2018:batchNormHelpOptimization}.
Surprisingly, many other innovations in network design can also be
viewed through the lens of neuron death. 

Up to this point we focused on fully-connected ReLU networks. Here
we briefly discuss how our results generalize to other network types.
Interestingly, many of the existing network architectures bear relevance
in preventing neuron death, even if they were not originally designed
for this purpose. What follows is meant to highlight the major categories
of feed-forward models, but our list is by no means exhaustive.

\subsection{Convolutional networks}

Convolutional neural networks (CNNs) are perhaps the most popular
type of artificial neural network. Specialized for image and signal
processing, these networks have most of their neurons constrained
to perform convolutions. Keeping with our previous conventions, we
say that a convolutional neuron takes in feature maps $X_{1},\dots,X_{\mathcal{N}}\in\mathbb{R}^{d\times d}$
and computes 
\begin{equation}
f(X_{1},\dots,X_{\mathcal{N}})=\max\left\{ b+\sum_{l=1}^{\mathcal{N}}A_{l}\ast X_{l},0\right\} ,\label{eq:-14}
\end{equation}
where the maximum is again taken element-wise. In the two-dimensional
case, $b\in\mathbb{R}$ and $A_{l}\in\mathbb{R}^{\mathcal{M}\times\mathcal{M}}$.
By the Riesz Representation Theorem, discrete convolution can be represented
by multiplication with some matrix $\tilde{A}_{l}\in\mathbb{R}^{d\times d}$.
Since $\tilde{A}_{l}$ is a function of the much smaller matrix $A_{l}$,
we need to rework our previous bounds in terms of the dimensions $\mathcal{N}$
and $\mathcal{M}$. It can be shown by similar arguments that
\begin{equation}
(1-2^{-\mathcal{N}})^{k}\le P(d,\mathcal{N},\mathcal{M},k)\le\left(1-2^{-\mathcal{N}(\mathcal{M}^{2}+1)}\right)^{k-1}.\label{eq:-15}
\end{equation}
Compare this to inequality \ref{eq:both_bounds}. As with fully-connected
networks, the lower bound depends on the number of neurons, while
the upper bound depends on the total number of parameters. To our
knowledge, the main results of this work apply equally well to convolutional
networks; the fully-connected type was used only for convenience.

\subsection{Residual networks and skip connections}

Residual networks (ResNets) are composed of layers which add their
input features to the output \citep{He:2016:ResNet}. Residual connections
do not prevent neuron death, as attested by other recent work \citep{arnekvist:2020:dyingReluMomentum}.
However, they do prevent a dead layer from automatically killing any
later ones, by creating a path around it. This could explain how residual
connections allow deeper networks to be trained \citep{He:2016:ResNet}.
The residual connection may also affect the output variance and prevent
information loss.

A related design feature is the use of skip connections, as in fully-convolutional
networks \citep{Shelhamer:2017:FCN}. For these architectures, the
probability of network death is a function of the shortest path through
the network. In many segmentation models this path is extremely short.
For example, in the popular U-Net architecture, with chunk length
$L$, the shortest path has a depth of only $2L$ \citep{Ronneberger:2015:Unet}.
This means that the network can continue to function even if the innermost
layers are disabled.

\subsection{Other nonlinearities}

This work focuses on the properties of the ReLU nonlinearity, but
others are sometimes used including the leaky ReLU, ``swish'' and
hyperbolic tangent \citep{Djork:2016:ELU,Ramachandran:2018:SwishNonlinearity}.
With the exception of the sigmoid type, all these alternatives retain
the basic shape of the ReLU, with only slight modifications to increase
smoothness or prevent neuron death. For all of these functions, part
of the input domain is much less sensitive than the rest. In the sigmoid
type it is the extremes of the input space, while in the ReLU variants
it the negatives. Our theory easily extends to all the ReLU variants
by replacing dead data points with \textit{weak} ones, i.e. those
with small gradients. Given that gradients are averaged over a mini-batch,
weak data points are likely equivalent to dead ones in practice \citep{Goodfellow:2016:DLBook}.
We suspect this practical equivalence is what allows the ReLU variants
to maintain similar performance levels to the original \citep{Ramachandran:2018:SwishNonlinearity}.

\section{Conclusions}

We have established several theoretical results concerning ReLU neuron
death, confirmed these results experimentally, and discussed their
implications for network design. The main results were an upper and
lower bound for the probability of network death $P(n,k)$, in terms
of the network width $n$ and depth $k$. We provided several arguments
for the asymptotic tightness of these bounds. From the lower bound,
we also derived a formula relating network width to depth in order
to guarantee a desired probability of living initialization. Finally,
we showed that our lower bound coincides with the probability of data
death, and developed a sign-flipping initialization scheme to reduce
this probability. 

We have seen that neuron death is a deep topic covering many aspects
of neural network design. This article only discusses what happens
at random initialization, and much more could be said about the complex
dynamics of training a neural network \citep{arnekvist:2020:dyingReluMomentum,wu2018:AdaptiveNetworkScaling}.
We do not claim that neuron death offers a complete explanation of
any of these topics. On the contrary, it adds to the list of possible
explanations, one of which may someday rise to preeminence. Even if
neuron death proves unimportant, it may still correlate to some other
quantity with a more direct relationship to model efficacy. In this
way, studying neuron death could lead to the discovery of other important
factors, such as information loss through a deep network. At any rate,
the theory of deep learning undoubtedly lags behind the recent advances
in practice, and until this trend reverses, every avenue is deserving
of exploration.

\bibliographystyle{plain}
\bibliography{../../neurips2020}

\end{document}